\documentclass[11pt]{article}

\usepackage[final]{acl}

\usepackage{times}
\usepackage{latexsym}
\renewcommand{\cite}{\citep}
\usepackage[T1]{fontenc}    
\usepackage{hyperref}       
\usepackage{url}            
\usepackage{booktabs}       
\usepackage{amsfonts}       
\usepackage{nicefrac}       
\usepackage{microtype}      
\usepackage{xcolor}         
\definecolor{citecolor}{HTML}{0071BC}
\definecolor{linkcolor}{HTML}{D32F2F}
\definecolor{cellcolor}{HTML}{E3F2FD}
\definecolor{red}{HTML}{D32F2F}
\definecolor{magenta}{HTML}{D81B60}

\usepackage{svg}
\svgsetup{inkscapearea=page}

\usepackage{amsmath}
\usepackage{amssymb}
\usepackage{mathtools}
\usepackage{amsthm}
\usepackage{amsmath, amssymb, amsthm}
\usepackage{amsfonts}
\usepackage[capitalize,noabbrev]{cleveref}

\theoremstyle{plain}
\newtheorem{theorem}{Theorem}[section]

\theoremstyle{definition}

\theoremstyle{remark}

\usepackage[textsize=tiny]{todonotes}
\usepackage{microtype}
\usepackage{graphicx}
\usepackage{booktabs} 

\usepackage{pgfplots}
\pgfplotsset{compat = newest}
\usepackage{multirow}
\usepackage{enumitem}
\usepackage{caption}
\usepackage{fancybox}
\usepackage{framed}
\usepackage{tcolorbox}

\usepackage{algorithm}
\usepackage{mathrsfs}
\usepackage{mathtools}
\usepackage{algorithmic}
\newcommand{\rcomment}[1]{\hfill{\textit{#1}}}
\usepackage{tcolorbox}
\usepackage{listings}
\usepackage{makecell}
\usepackage{colortbl}
\usepackage{color}
\usepackage{wrapfig}
\usepackage{cancel}
\usepackage{soul,xcolor}
\usepackage{pifont}
\usepackage{tikz}
\usetikzlibrary{arrows.meta, positioning, fit, backgrounds}
\usetikzlibrary{shapes, positioning, arrows.meta, calc, decorations.pathmorphing, quotes}
\usepackage{todonotes}
\tcbuselibrary{breakable}
\usepackage{hyperref}
\usepackage{makecell}
\usepackage{url}
\hypersetup{colorlinks=true, linkcolor=linkcolor, citecolor=citecolor,urlcolor=magenta}
\usepackage{booktabs}
\usepackage{hyperref}
\usepackage{url}
\usepackage{graphicx}
\usepackage{subcaption}
\usepackage{amsthm}

\usepackage{enumitem}
\usepackage{enumitem}
\setlist[itemize]{noitemsep, topsep=2pt, parsep=2pt, partopsep=0pt}
\setlist[itemize]{leftmargin=2em}
\setlist[enumerate]{leftmargin=2em}

\DeclareMathOperator{\mean}{mean}
\DeclareMathOperator{\std}{std}

\usepackage[T1]{fontenc}

\usepackage[utf8]{inputenc}

\usepackage{microtype}

\usepackage{inconsolata}

\usepackage{graphicx}

\title{Targeted Exploration via Unified Entropy Control for Reinforcement Learning}

\author{
 \textbf{Chen Wang\textsuperscript{1,2}},
 \textbf{Lai Wei\textsuperscript{2,3}},
 \textbf{Yanzhi Zhang\textsuperscript{2,4}},
 \textbf{Chenyang Shao\textsuperscript{2,5}},
\\
 \textbf{Zedong Dan\textsuperscript{2,6}},
 \textbf{Weiran Huang\textsuperscript{3}},
 \textbf{Ge Lan\textsuperscript{1,*}},
 \textbf{Yue Wang \textsuperscript{2,*}},
\\
\\
 \textsuperscript{1}College of Software, Nankai University
 \textsuperscript{2}Zhongguancun Academy
 \textsuperscript{3}Shanghai Jiao Tong University\\
 \textsuperscript{4}Chinese Academy of Sciences
 \textsuperscript{5}Tsinghua University
  \textsuperscript{6}Sun Yat-sen Univeristy
\\
 \small{
   \textbf{\textsuperscript{*}Correspondence:} 
   \href{mailto:email@domain}{lange@nankai.edu.cn, yuewang@bza.edu.cn}
 }
}

\begin{document}
\maketitle
\begin{abstract}
Recent advances in reinforcement learning (RL) have improved the reasoning capabilities of large language models (LLMs) and vision-language models (VLMs). However, the widely used Group Relative Policy Optimization (GRPO) consistently suffers from entropy collapse, causing the policy to converge prematurely and lose diversity. Existing exploration methods introduce additional bias or variance during exploration, making it difficult to maintain optimization stability. We propose Unified Entropy Control for Reinforcement Learning (UEC-RL), a framework that provides targeted mechanisms for exploration and stabilization. UEC-RL activates more exploration on difficult prompts to search for potential and valuable reasoning trajectories. In parallel, a stabilizer prevents entropy from growing uncontrollably, thereby keeping training stable as the model consolidates reliable behaviors. Together, these components expand the search space when needed while maintaining robust optimization throughout training. Experiments on both LLM and VLM reasoning tasks show consistent gains over RL baselines on both Pass@1 and Pass@$k$. On Geometry3K, UEC-RL achieves a 37.9\% relative improvement over GRPO, indicating that it sustains effective exploration without compromising convergence and underscoring UEC-RL as a key for scaling RL-based reasoning in large models. Our code is available at \url{https://github.com/597358816/UEC-RL}.
\end{abstract}

\begin{figure*}
    \centering
    \includegraphics[width=\linewidth]{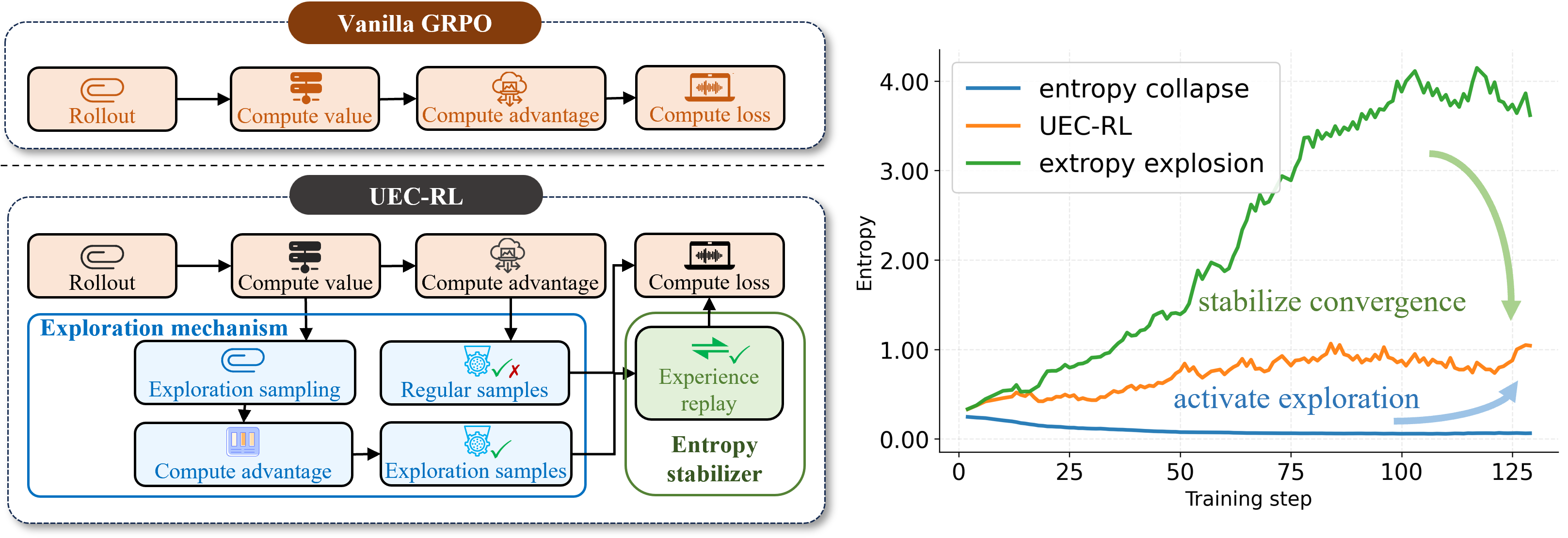}
    \caption{\small Illustration of UEC-RL. UEC-RL balances exploration and stabilization, keeping entropy within an optimal operating range.}
    \label{fig:uecrl}
    \vspace{-8pt}
\end{figure*}

\section{Introduction}

\begingroup
\renewcommand\thefootnote{}
\footnotetext{Accepted for publication in Findings of the 64th Annual Meeting of the Association for Computational Linguistics (ACL 2026).}
\addtocounter{footnote}{-1}
\endgroup

\paragraph{Reinforcement learning (RL)} has become a central paradigm in the post-training of large language models (LLMs) and vision-language models (VLMs) \cite{glm2024chat, touvron2023llama}. Early RLHF methods, such as Proximal Policy Optimization (PPO) and Direct Preference Optimization (DPO), align model outputs with human preferences using preference-based rewards \cite{schulman2017proximal, rafailov2023direct, zhong2024dpo, wang2024comprehensive}. More recently, Reinforcement Learning with Verifiable Rewards (RLVR) has emerged as a scalable alternative by leveraging automatically verifiable supervision \cite{mroueh2025reinforcement}. Within this framework, Group Relative Policy Optimization (GRPO) was introduced as a lightweight PPO variant that removes the critic and estimates advantages via group-normalized rewards, achieving strong efficiency and competitive reasoning performance \cite{shao2024deepseekmath, liu2024deepseek, guo2025deepseek}.

\paragraph{Exploration} in RL is the process of guiding the policy to observe sufficiently diverse and informative samples during training, so that optimization can operate over a broader solution space instead of collapsing to suboptimal behaviors early~\cite{sutton1998reinforcement, auer2002finite, strehl2008analysis, kolter2009near}. \textbf{Entropy} quantifies the policy’s uncertainty during inference and is commonly used as a proxy for exploration~ \cite{schulman2017equivalence, haarnoja2018soft, nachum2017bridging}, yet recent studies show that GRPO suffers from entropy collapse, where policy entropy rapidly drops and responses become highly convergent, severely limiting the discovery of potential and valuable trajectories  \cite{yue2025does, yu2025dapo}. DAPO slows entropy decay via a clip-higher strategy, but often at the cost of increased update variance and unstable training \cite{yu2025dapo}. Other methods introduce entropy bonuses or modified advantages \cite{cui2025entropy, cheng2025reasoning}, but the resulting exploration is frequently biased, as these approaches explicitly optimize entropy-related terms rather than task rewards. \textbf{Moreover, effective exploration should emphasize informative and high-quality diversity, rather than indiscriminately encouraging randomness}. This requires mechanisms that can suppress excessively high-entropy behaviors once exploration becomes unproductive, a capability that is largely missing in existing approaches. Overall, the field still lacks a mechanism for controlling entropy in both directions, exploration and stabilization.

To address this gap, we introduce UEC-RL, a framework that integrates exploration and stabilization within a single, coherent mechanism. Rather than merely slowing entropy collapse, UEC-RL actively adjusts the degree of exploration according to problem difficulty, enabling entropy to increase when deeper reasoning is required and allowing the model to access low-probability but informative trajectories that standard sampling often misses. At the same time, UEC-RL incorporates a stabilizing mechanism that restrains uncontrolled entropy growth, reinforces reliable behaviors, and guides the policy toward stable convergence as learning progresses. Through this coordinated design, UEC-RL dynamically balances exploration and exploitation throughout training.

Experiments across a wide range of LLM and VLM reasoning benchmarks show that UEC-RL delivers consistent gains over RL baselines. On the challenging Geometry3K dataset, it achieves a 37.9\% relative improvement in accuracy while maintaining more appropriate training dynamics. UEC-RL also provides robust improvements on Pass@$k$, demonstrating that UEC-RL is an effective method for advancing RL-based reasoning in large-scale models. Our contributions are summarized as follows:

\begin{itemize}[leftmargin=1em]
    \item We introduce UEC-RL, which provides bidirectional entropy regulation, enabling both controllable entropy increase for deep exploration and controllable entropy stabilization for reliable training. As shown in Fig.~\ref{fig:uecrl}, UEC-RL includes:
    \begin{itemize}[leftmargin=1.5em]
        \item A targeted exploration mechanism that activates high-entropy reasoning specifically on difficult problems, allowing the model to uncover low-probability but informative trajectories that standard sampling rarely reaches.
        \item A controllable entropy stabilizer that amplifies reliable gradients, suppresses unbounded exploration, and guides the policy toward stable convergence.
    \end{itemize}
    \item We demonstrate consistent improvements across LLM and VLM reasoning benchmarks, including strong gains on Geometry3K and Pass@$k$ evaluations, confirming the effectiveness and generality of the proposed entropy-control paradigm.
\end{itemize}

\begin{figure*}[t]
	\setlength{\belowdisplayskip}{-10pt} 
\begin{equation}
    \label{eq:GRPO}
 \begin{split}
    \small
  \mathcal{J}_{GRPO}(\theta) = & \mathbb{E}_{(q,a)\sim \mathcal{D},\ O=\{o_i\}_{i=1}^{G} \sim \pi_{\theta_{old}(\cdot | q)}} \\ &
   \frac{1}{G}\sum_{i=1}^{G}\frac{1}{|o_i|}\sum_{t=1}^{|o_i|}  \left\{\min \left[r_{i,t}(\theta)\hat{A}_{i,t},\ \text{clip} (r_{i,t}(\theta),1-\epsilon,1+\epsilon)\hat{A}_{i,t}\right] - \beta D_{KL}[\pi_{\theta}||\pi_{ref}]\right\},
 \end{split}
\end{equation}
\noindent \text{where 
$r_{i,t}(\theta) = \frac{\pi_{\theta}(o_{i,t}|q,o_i<t)}{\pi_{\theta_{old}}(o_{i,t}|q,o_i<t)}$, and $\hat{A}_{i,t} = \frac{R_i-\mean(\{R_i\}_{i=1}^{G})}{\std(\{R_i\}_{i=1}^{G})}.$}\\

 \centering
  \includegraphics[width=0.3\linewidth]{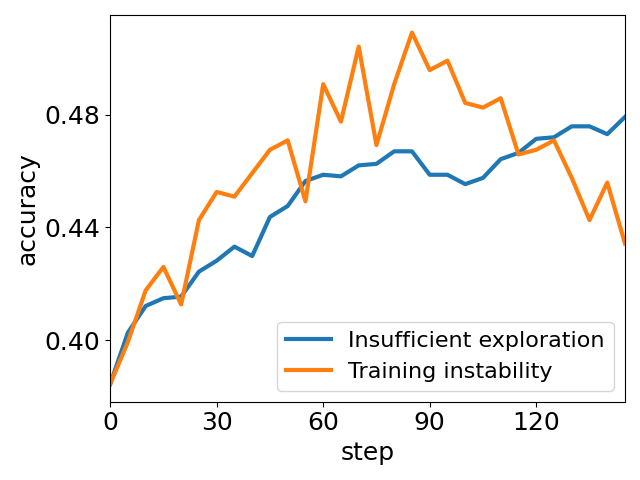}
  \includegraphics[width=0.3\linewidth]{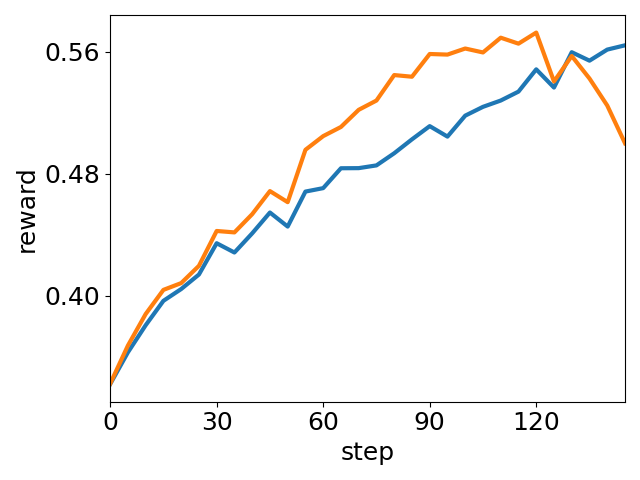}
  \includegraphics[width=0.3\linewidth]{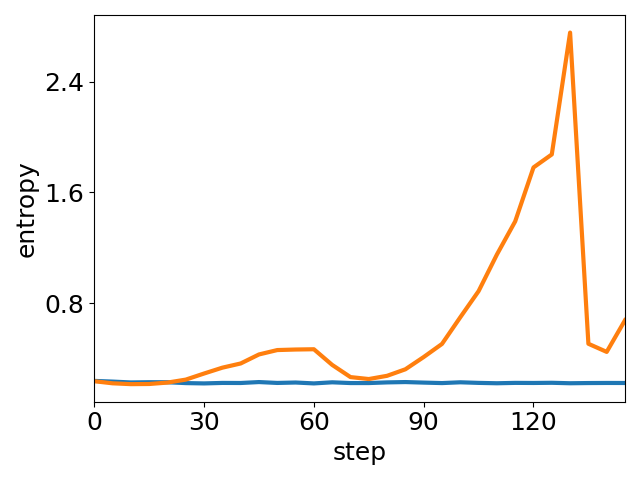}
 \caption{Two prominent optimization issues of GRPO on Geometry3K: insufficient exploration (entropy collapse) and unstable training dynamics.}
 \label{fig:GRPO-drawback}
\end{figure*}

\section{Related Work}

Recent post-training progress has shown that reinforcement learning is effective for improving reasoning in foundation models, offering a scalable way to refine long-chain decision behaviors beyond supervised fine-tuning \cite{openai2024reasoning}. Recent advances in aligning large language models (LLMs) and vision-language models (VLMs) have been driven by reinforcement learning techniques \cite{openai2023gpt4, team2024gemini1_5, zhu2023minigpt, wei2023instructiongpt, liu2023visual, team2025kimi1_5, yang2025qwen3}, most notably RLHF and policy optimization methods such as DPO, PPO, and GRPO \cite{rafailov2024direct, ouyang2022training, schulman2017proximal, shao2024deepseekmath}. More recently, Reinforcement Learning with Verifiable Rewards (RLVR) has shown strong performance in reasoning-intensive domains by removing learned reward models and relying on structured, verifiable supervision \cite{lambert2024tulu, guo2025deepseek}. In this setting, optimization efficiency and training dynamics become critical, particularly for GRPO-based methods that normalize rewards within rollout groups.

Exploration is a core component of reinforcement learning, enabling policies to escape local optima and discover high-reward behaviors \cite{sutton1998reinforcement}. In policy gradient methods, entropy is commonly used as a proxy for exploration to encourage behavioral diversity \cite{schulman2017proximal}. However, recent studies show that GRPO-based training often suffers from entropy collapse, resulting in highly convergent samples and insufficient exploration \cite{yue2025does, yu2025dapo}.  Existing remedies can be broadly grouped into two categories. The first modifies policy updates via clipping strategies \cite{yu2025dapo, hao2025rethinking, su2025gppo}. By relaxing the upper clipping bound, these methods can slow entropy contraction, but often amplify update variance and destabilize training. The second category promotes exploration through entropy bonuses, such as entropy regularization or entropy-shaped advantages \cite{adamczyk2025average, li2025entropy, hou2025advancing, tan2025gtpo, cui2025entropy, cheng2025reasoning}. While effective at increasing diversity, these approaches rely on coarse regularization and may introduce optimization bias, since exploration is driven by entropy-related objectives rather than task rewards.

\section{Preliminaries}

\subsection{RL baseline: GRPO}

GRPO has been widely used to improve the reasoning capabilities of large language models, particularly in mathematical problem-solving. Unlike PPO, GRPO removes the critic and estimates advantages using group-normalized rewards, resulting in significantly improved computational efficiency. Given a question $q$ with a verifiable answer $a$, GRPO samples $G$ responses $O=\{o_i\}_{i=1}^{G}$ from the old policy $\pi_{\theta_{\text{old}}}$ and updates the policy by maximizing the group-relative objective shown in Eq.~\eqref{eq:GRPO}. By normalizing rewards within each sampled group, GRPO provides a lightweight and scalable alternative to PPO and has demonstrated strong performance across diverse reasoning benchmarks.

\subsection{Limitations of GRPO}

Despite its empirical success, GRPO presents notable shortcomings when applied to complex reasoning tasks. Fig.~\ref{fig:GRPO-drawback} illustrates two representative failure modes on the Geometry3K dataset, which were observed under identical settings, revealing GRPO’s difficulty in maintaining adequate exploration and stable optimization dynamics.

\paragraph{Entropy collapse.}
As illustrated in Fig.~\ref{fig:GRPO-drawback}, policy entropy often decreases rapidly during training, causing the model to converge prematurely to low-diversity behaviors. This collapse is especially common in text-only reasoning, where training is relatively stable and the absence of entropy-increasing mechanisms prevents the model from exploring low-probability but informative trajectories. Consequently, the model activates only shallow reasoning patterns while deeper knowledge remains underutilized.

\paragraph{Training instability.}
A second failure mode commonly appears in more complex settings such as multimodal reasoning, as illustrated in Fig.~\ref{fig:GRPO-drawback}, where optimization exhibits sharp fluctuations. When sampled outputs vary greatly in correctness or reasoning difficulty, the group-normalized reward used by GRPO no longer provides sufficient variance reduction. As a result, gradient updates become brittle, causing unstable learning dynamics and occasionally leading to sudden performance degradation or even collapse.

These two issues share a structural root cause: GRPO lacks a mechanism for regulating entropy in either direction. It cannot actively increase entropy to enhance exploration on difficult prompts, nor can it stabilize entropy in high-variance regimes to ensure reliable convergence. This limitation motivates the development of a unified framework capable of dynamic, bidirectional entropy control.

\section{Methodology}

To address the entropy collapse and instability issues observed in GRPO, we introduce \textbf{UEC-RL}, which enables both controlled entropy increase for exploration and entropy reduction for stable convergence. This section first presents the targeted exploration mechanism, which adaptively activates high-entropy reasoning on difficult prompts, followed by the entropy-reducing replay mechanism that stabilizes learning and improves sample efficiency. The overall UEC-RL training procedure is summarized in Algorithm~\ref{alg:uec_rl}.

\subsection{Targeted Exploration Mechanism}
UEC-RL enhances GRPO’s limited exploratory capacity through a unified mechanism that (1) identifies prompts requiring deeper search, (2) expands the sampling space only when necessary, and (3) selectively retains informative trajectories. 

\paragraph{Expanding the exploration space.} If none of the initial $G$ rollouts solve a prompt, the prompt $\bar{\mathcal{D}}$ is marked as difficult, indicating insufficient exploration under the current policy. 
\vspace{-4pt}
\begin{equation*}
\begin{aligned}
\bar{\mathcal{D}}
=\Bigl\{(q,a)\;:\;& o_i \sim \pi_{\theta_{\text{old}}},\ i=1,\dots,G,\\
&\#\{\,i:\ R(a,o_i)>0\,\}=0
\Bigr\}.
\end{aligned}
\end{equation*}
\vspace{-8pt}

For such prompts, UEC-RL temporarily samples from a softened distribution with temperature $t'$, increasing the chance of uncovering low-probability but informative reasoning paths while leaving easy prompts unaffected.

\paragraph{Exploring informative trajectories.} From all collected samples, UEC-RL retains only two types of trajectories: regular samples $O_R$ with nonzero advantage and valuable samples  $O_H$ obtained under expanded sampling:
\begin{equation*}
\begin{aligned}
 O_R&=\left\{ \{o_i\}_{1:G} \sim \pi_{\theta_{old}}: (q,a)\sim \mathcal{D},\ \hat{A}_{i,t} \ne 0 \right\};  \\
O_H&=\left\{ \{o_i\}_{1:G'} \sim \pi^{t'}_{\theta_{old}}: (q,a)\sim \bar{\mathcal{D}}, \hat{A}_{i,t}\ge0 \right\}.  
\end{aligned}
\end{equation*}

\begin{algorithm}[t]
\caption{UEC-RL}
\label{alg:uec_rl}
\begin{algorithmic}[1]
\STATE \textbf{Input:} Dataset $\mathcal{D}$, policy $\pi_\theta$.
\STATE \textbf{Hypers:}
\STATE \hspace{1em}-- $G$: rollout group size.
\STATE \hspace{1em}-- $G'$: exploration group size, $G'>G$.
\STATE \hspace{1em}-- $t'$: exploration temperature, $t'>1$.
\STATE \hspace{1em}-- $s'$: replay size, a multiple of batch size.
\STATE \hspace{1em}-- $f_{\mathrm{replay}}$: replay frequency, positive integer.
\STATE Init queue $\mathcal{B}_{\mathrm{replay}}$ by size $s'$;
\REPEAT
  \STATE \rcomment{// Step 1: Regular rollout}
  \STATE Sample minibatch $\mathcal{B}_{\mathrm{data}}\subset\mathcal{D}$;
  \FOR{each $(q,a)\in\mathcal{B}_{\mathrm{data}}$}
    \STATE Sample $O \leftarrow \{o_i\}_{1:G}\sim \pi_{\theta_{\mathrm{old}}}$;
    \STATE Compute $R_i,\hat{A}_{i,t}$ of $O$;
    \IF{$\max_i R_i>0$}
      \STATE $O_R \leftarrow \{o_i\in O:\hat{A}_{i,t}\neq 0\}$;
    \ELSE
      \STATE \rcomment{// Step 2: Exploration rollout}
      \STATE Sample $O' \leftarrow \{o_i\}_{1:G'}\sim \pi^{t'}_{\theta_{\mathrm{old}}}$;
      \STATE Compute $R_i,\hat{A}_{i,t}$ of $O'$;
      \STATE $O_H \leftarrow \{o_i \in O':\hat{A}_{i,t}>0\}$;
      \STATE $\mathcal{O}_S \leftarrow O_H \cup \{o_i \in O:\hat{A}_{i,t}>0\}$;
      \STATE Push $\mathcal{O}_S$ into $\mathcal{B}_{\mathrm{replay}}$;
    \ENDIF
  \ENDFOR

  \STATE $\mathcal{O}_{\mathrm{eff}} \leftarrow \bigcup O_R \cup \bigcup O_H$;
  \STATE Compute $\pi_{\theta_{\mathrm{old}}}$ of $\mathcal{O}_{\mathrm{eff}}$;
  \STATE Update actor using $\mathcal{O}_{\mathrm{eff}}$;

  \STATE \rcomment{// Step 3: Replay stabilization}
  \IF{$\text{global\_step}\bmod f_{\mathrm{replay}}=0$}
    \STATE Sample $\mathcal{O}_S\subset\mathcal{B}_{\mathrm{replay}}$ and update actor;
  \ENDIF
\UNTIL{convergence}
\end{algorithmic}
\end{algorithm}

Low-advantage exploratory samples are filtered out because they tend to introduce noisy gradients that hinder optimization and generalization. These retained samples constitute the effective optimization set and enable exploration to be increased in a targeted manner.

By integrating difficulty detection, adaptive search expansion, and selective retention into a coherent procedure, UEC-RL activates exploratory behavior precisely where deeper reasoning is required. The mechanism is theoretically supported by the following result:

\begin{theorem}[Entropy Change]
\label{theo:entropy}
For a softmax policy updated by natural policy gradient with step size $\eta$,
\[
\begin{split}
   & H(\pi_\theta^{k+1}) - H(\pi_\theta^{k})
\approx~ \\ &
\quad -\eta\,\mathbb{E}_{s\sim d_\mu^{k}}
\mathrm{Cov}_{a\sim\pi_\theta^{k}(\cdot|s)} 
\bigl[\log\pi_\theta^{k}(a|s),~A^{\pi^k}(s,a)\bigr].
\end{split}
\]
\end{theorem}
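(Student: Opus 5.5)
The plan is a first-order Taylor expansion of the entropy in the logits, with the natural-gradient update substituted into the linear term. I take the tabular softmax parametrization $\pi_\theta(a|s)\propto\exp(z_{s,a})$, with logits $z$ as the parameters. Under this parametrization the natural policy gradient step with step size $\eta$ has the well-known closed form $z^{k+1}_{s,a}-z^k_{s,a}=\eta\,A^{\pi^k}(s,a)$, up to a state-dependent constant that leaves the softmax unchanged. I will quote this from the NPG literature (Kakade; Agarwal et al.) rather than re-derive it. It follows because the Fisher matrix of the softmax at state $s$ is weighted by $d^k_\mu(s)$ and acts as $\mathrm{diag}(\pi)-\pi\pi^\top$, and the compatible-function-approximation argument shows that its pseudo-inverse maps the gradient to the advantage vector.

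The first main step is to compute, at a fixed state $s$, the derivative of the conditional entropy $H_s(\pi)=-\sum_a\pi(a|s)\log\pi(a|s)$ with respect to $z_{s,a}$. Using $\partial\pi(a')/\partial z_a=\pi(a')(\mathbf 1[a=a']-\pi(a))$ and $\sum_{a'}\partial\pi(a')/\partial z_a=0$, one gets
\[
\frac{\partial H_s}{\partial z_{s,a}}=-\pi(a|s)\bigl(\log\pi(a|s)-\mathbb{E}_{a'\sim\pi}[\log\pi(a'|s)]\bigr).
\]
The first-order expansion then reads $H_s(\pi^{k+1})-H_s(\pi^k)\approx\sum_a \frac{\partial H_s}{\partial z_{s,a}}\,\Delta z_{s,a}$. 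Substituting $\Delta z_{s,a}=\eta A^{\pi^k}(s,a)$ gives
\[
-\eta\sum_a\pi(a|s)\bigl(\log\pi(a|s)-\mathbb{E}\log\pi\bigr)A^{\pi^k}(s,a)=-\eta\,\mathrm{Cov}_{a\sim\pi^k}\bigl[\log\pi^k(a|s),A^{\pi^k}(s,a)\bigr].
\]
A state-independent shift of $\Delta z$ would contribute nothing, since the centered log-probability has zero mean; this confirms the formula is invariant to the normalization ambiguity in the NPG step.

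The final step is to aggregate over states. I define the policy entropy $H(\pi_\theta)$ as $\mathbb{E}_{s\sim d_\mu}H_s(\pi)$, and to first order I freeze the state distribution at $d^k_\mu$. Taking the expectation of the per-state identity then yields the claimed expression.

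The main obstacle is justifying the "$\approx$". The second-order remainder is $O(\eta^2\|A\|_\infty^2)$ per state, which is controlled by the boundedness of the Hessian of entropy in logit space on the simplex interior. The change $d^{k+1}_\mu-d^k_\mu$ contributes a term that is also $O(\eta)$ in general, so it must be either absorbed by defining the entropy under the current visitation distribution (as I do) or argued to be negligible. I would state explicitly that the result is a first-order statement in $\eta$ with the state distribution held fixed, following the analogous derivation in the entropy-mechanism literature (Cui et al.).
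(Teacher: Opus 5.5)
Your proposal is correct and is essentially the derivation the paper defers to. The paper gives no proof of its own and cites Cui et al.\ and the Zhihu note, which follow the same route you do: a first-order expansion of the per-state entropy in the logits, with gradient $-\pi(a|s)\bigl(\log\pi(a|s)-\mathbb{E}_{a'\sim\pi}[\log\pi(a'|s)]\bigr)$, then substitution of the tabular NPG step $\Delta z_{s,a}=\eta A^{\pi^k}(s,a)$, then averaging over $s\sim d_\mu^k$ with the state distribution held fixed.

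Two small slips to fix. The shift you show to be harmless is the \emph{action}-independent (per-state) constant, not a state-independent one. Also, Agarwal et al.'s NPG step carries a factor $1/(1-\gamma)$, which you are silently absorbing into $\eta$.
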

This theorem was first introduced by~\citet{zhihu2025entropy}, and was organized and extended by~\citet{cui2025entropy}. Proof can be seen in~\citet{zhihu2025entropy} and~\citet{cui2025entropy}. $\displaystyle{H}$ indicates the policy entropy of policy model, and $\text{Cov}$ denotes covariance, $\pi^k_\theta$ is the policy at step $k$, and $A^{\pi^k}(s,a)$ is the advantage function of action $a$ under state $s$. The covariance term becomes negative when high-advantage actions receive low probability under the current policy. In such cases, updates increase the policy entropy. Using an elevated temperature $t'>1$ amplifies this effect by further reducing the gap between high- and low-probability actions, making negative covariance more likely. As a result, UEC-RL induces controlled entropy increase specifically on difficult prompts, allowing the model to escape collapsed regimes and explore deeper reasoning trajectories that standard GRPO would fail to reach.

\begin{table*}[t]
 \caption{Comparison of Pass@1 accuracy on text and multimodal reasoning benchmarks. UEC-RL consistently outperforms the RL baselines. For AIME24 and AIME25, each question is repeated 32 times.}
 \vspace{2mm}
 \label{tab:main_results_all}
 \centering
 \resizebox{1\textwidth}{!}{%
  \renewcommand{\arraystretch}{1.2}
  \begin{tabular}{
    >{\raggedright\arraybackslash}m{5.0cm}  
    *{7}{>{\centering\arraybackslash}m{1.625cm}}| 
    >{\centering\arraybackslash}m{2.125cm}    
   }
   \toprule
   \textbf{Text Benchmarks} & \textbf{AIME24} & \textbf{AIME25} & \textbf{MATH}  & \textbf{GSM8K} & \textbf{Minerva} & \textbf{ARC} & \textbf{MMLU}  & \textbf{Average} \\
   \midrule
   \rowcolor{gray!10} Qwen2.5-math-7B  & 15.2 & 5.39 & 65.5 & 65.4 & 47.3 & 69.9 & 34.3 & 43.28 \\
   \quad+GRPO   & 25.8 & 9.27 & 77.6 & 87.1 & 29.0 & 78.6 & 45.0 & 50.34 \\
   \quad+DAPO   & 24.3 & 8.54 & 78.3 & 87.6 & 34.2 & 80.9 & 48.5 & 51.77 \\
   \quad+KL-cov   & 27.3 & 8.13 & 79.6 & \textbf{88.3} & 33.1 & 79.9 & 46.6 & 51.85 \\
   \quad+Entropy-Adv & 26.7 & 9.90 & 78.9 & 86.8 & 35.0 & 81.6 & 46.2  & 52.16 \\
   \rowcolor{blue!5}\quad+\textbf{UEC-RL} & \textbf{28.5} & \textbf{10.7} & \textbf{80.4} & 87.9 & \textbf{35.7} & \textbf{82.0} & \textbf{50.2} & \textbf{53.62} \\ 
   \rowcolor{blue!5} \quad\quad $\Delta$ vs. GRPO & \textbf{\textcolor{green!70!black}{+2.7}} & \textbf{\textcolor{green!70!black}{+1.43}} & \textbf{\textcolor{green!70!black}{+2.8}} &
   \textbf{\textcolor{green!70!black}{+0.8}} & 
   \textbf{\textcolor{green!70!black}{+6.7}} & 
   \textbf{\textcolor{green!70!black}{+3.4}} & 
   \textbf{\textcolor{green!70!black}{+2.3}} &
   \textbf{\textcolor{green!70!black}{+2.88}} \\
   \midrule
   \rowcolor{gray!10} Llama3.1-8B-Instruct  & 5.83 & 1.67 & 49.0 & 87.1 & 23.9 & 83.3 & 51.8 & 43.23 \\
   \quad+GRPO   & 8.02 & 1.67 & 54.6 & 87.6 & 26.8 & 84.3 & 53.4 & 45.20 \\
   \quad+DAPO   & 8.33 & 1.14 & 54.4 & 88.4 & 26.5 & 84.6 & 53.9 & 45.32 \\
   \quad+KL-cov & 8.22 & 0.83 & 54.0 & 88.2 & 27.6 & 84.6 & 54.5 & 45.42 \\
   \quad+Entropy-Adv & 8.02 & 1.67 & 54.0 & 87.4 & 27.2 & 84.3 & 52.3 & 44.98 \\
   \rowcolor{blue!5}\quad+\textbf{UEC-RL} & \textbf{9.27} & \textbf{2.08} & \textbf{56.6} & \textbf{88.8} & \textbf{28.3} & \textbf{85.3} & \textbf{55.9} & \textbf{46.61} \\ 
   \rowcolor{blue!5} \quad\quad $\Delta$ vs. GRPO & \textbf{\textcolor{green!70!black}{+1.25}} & \textbf{\textcolor{green!70!black}{+0.41}} & \textbf{\textcolor{green!70!black}{+2.0}} &
   \textbf{\textcolor{green!70!black}{+1.2}} & 
   \textbf{\textcolor{green!70!black}{+1.5}} &
   \textbf{\textcolor{green!70!black}{+1.0}} &
   \textbf{\textcolor{green!70!black}{+2.5}} &
   \textbf{\textcolor{green!70!black}{+1.41}} \\
   \midrule
   \end{tabular}
   }   
\resizebox{1\textwidth}{!}{%
  \renewcommand{\arraystretch}{1.2}
   \begin{tabular}{
    >{\raggedright\arraybackslash}m{5.0cm}  
    *{4}{>{\centering\arraybackslash}m{2.85cm}}| 
    >{\centering\arraybackslash}m{2.125cm}    
   }
   \textbf{Multimodal Benchmarks} & \textbf{MathVision} & \textbf{MathVerse} & \textbf{MathVista} & \textbf{We-Math} & \textbf{Average} \\
   \midrule
   \rowcolor{gray!10} Qwen2.5-VL-7B-Instruct  & 24.87 & 43.83 & 66.30 & 62.87 & 49.47 \\
   \quad+GRPO  & \textbf{29.11} & 47.51 & 72.60 & 67.53  & 54.19 \\
   \quad+DAPO  & 27.92 & 48.48 & 72.30 & 69.08  & 54.45 \\
   \quad+KL-cov  & 28.14 & 48.23 & 73.10 & 68.49  & 54.49 \\
   \quad+Entropy-Adv  & 27.86 & 48.63 & 71.80 & 68.62  & 54.23 \\
   \rowcolor{blue!5} \quad+\textbf{UEC-RL} & 
    28.82 & \textbf{49.34} & \textbf{73.40} & \textbf{69.48} & \textbf{55.26} \\
    \rowcolor{blue!5} \quad\quad $\Delta$ vs. GRPO & \textbf{\textcolor{red!70!black}{-0.29}} & \textbf{\textcolor{green!70!black}{+1.83}} & \textbf{\textcolor{green!70!black}{+0.80}} &
    \textbf{\textcolor{green!70!black}{+1.95}} & 
    \textbf{\textcolor{green!70!black}{+1.07}}\\
   \bottomrule
  \end{tabular}
 }
\end{table*}

\begin{figure*}[t]
 \centering
  \hspace*{0.15\textwidth}
  \includegraphics[height=0.17\textheight, keepaspectratio]{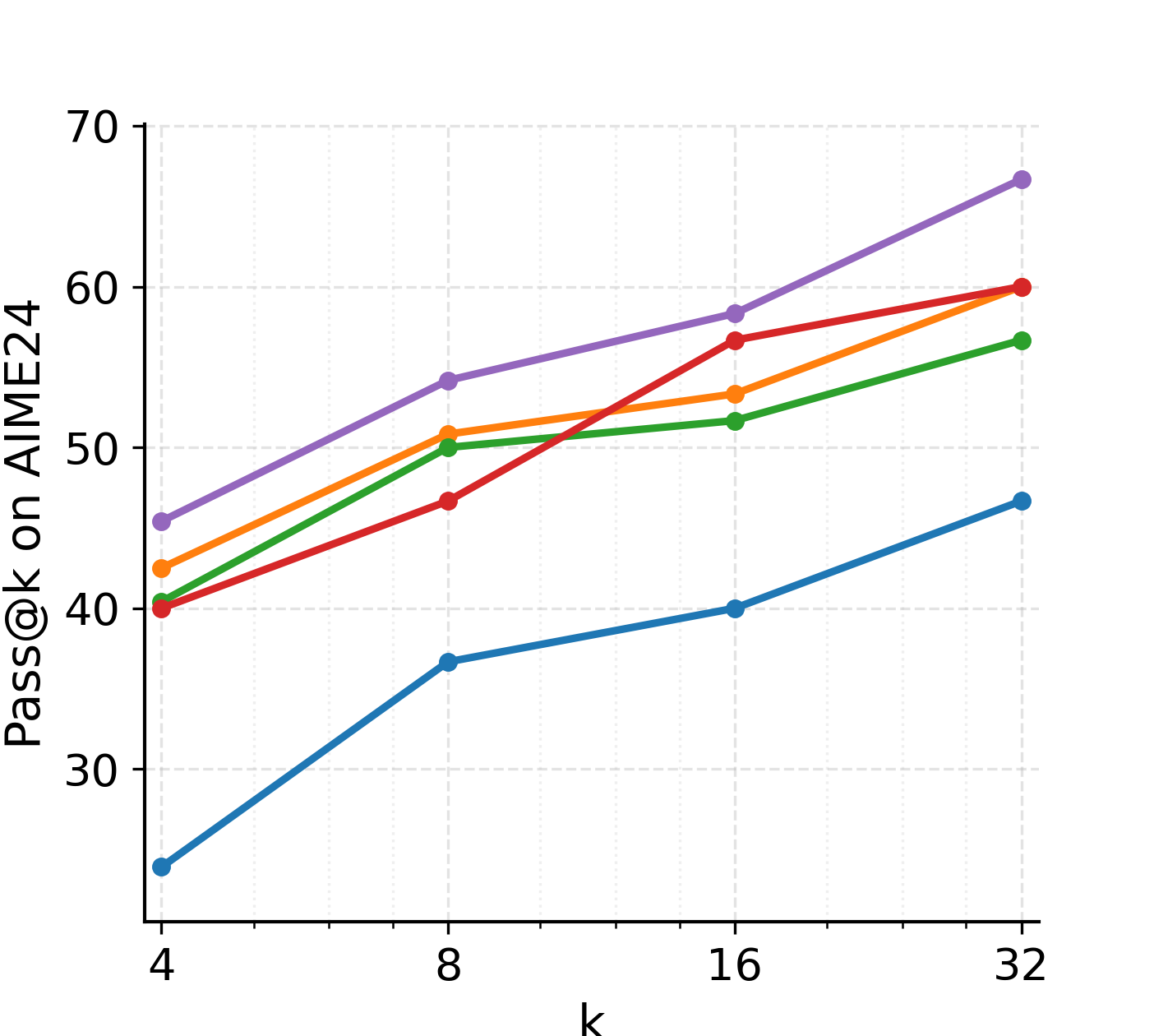}
  \includegraphics[height=0.17\textheight, keepaspectratio]{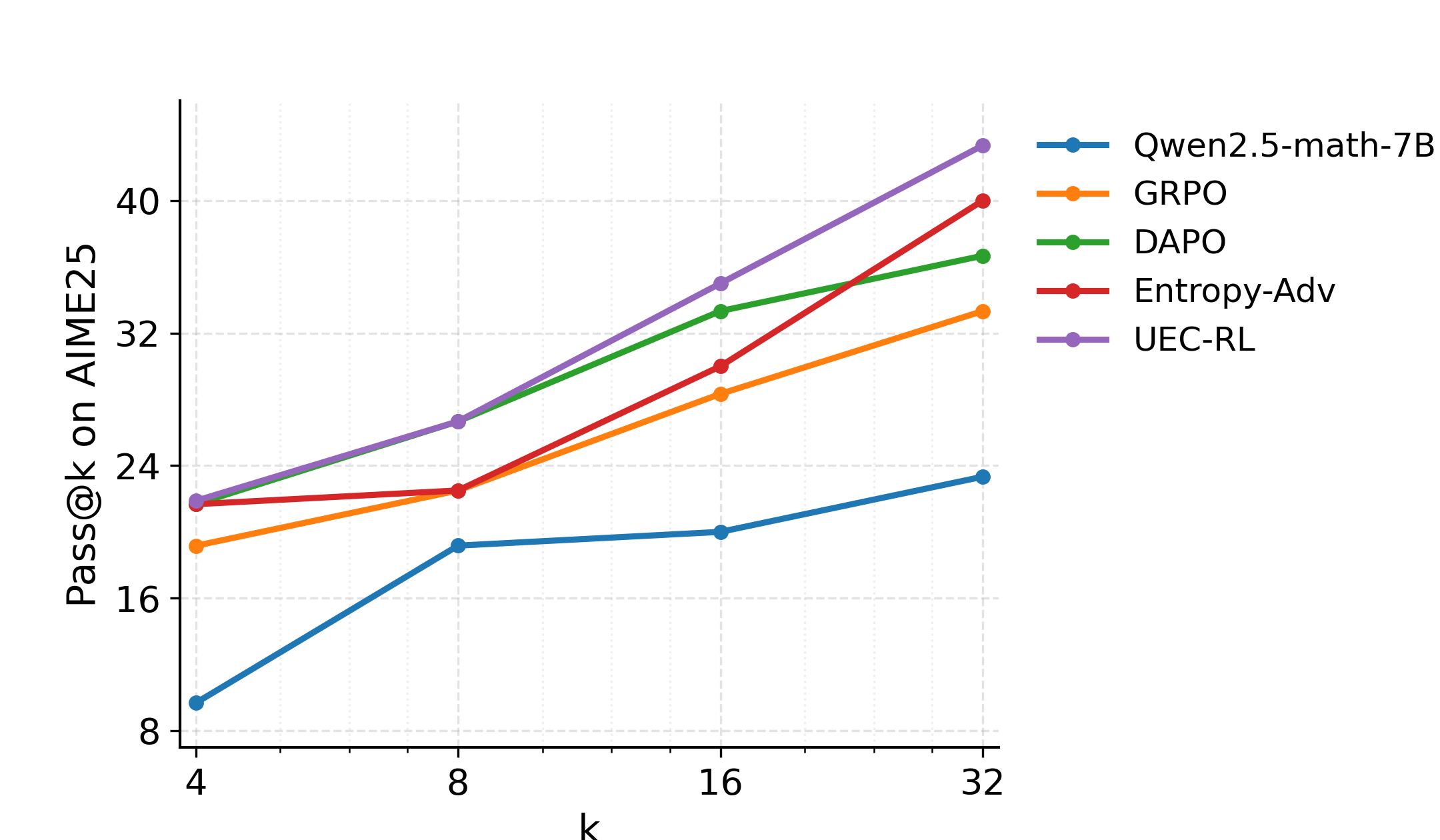}
 \caption{Pass@$k$ performance on the AIME24 and AIME25 benchmarks. UEC-RL consistently improves the success rate across different values of $k$.}   
 \label{fig:pass@k}
\end{figure*}

\begin{figure*}[t]
 \centering
  \begin{tabular}{ccccccc}
  \toprule
  Geometry3K & Qwen2.5-VL-7B-Instruct & UEC-RL & GRPO & DAPO & KL-cov & Entropy-Adv\\
  \midrule
  Accuracy & 38.44 & \textbf{55.41} & 50.75  & 49.09 & 47.09 & 50.91\\
  $\Delta$ vs. UEC-RL & - & - & 
  \textcolor{red!70!black}{-4.66} & 
  \textcolor{red!70!black}{-6.32} & 
  \textcolor{red!70!black}{-8.32} & 
  \textcolor{red!70!black}{-4.50}\\
  Time per step & - & 0.79$\times$ & 1$\times$  & 0.64$\times$ & 1$\times$ & 1$\times$ \\
  \bottomrule
 \end{tabular}\\
        \vspace{2mm}
  \includegraphics[width=0.3\linewidth]{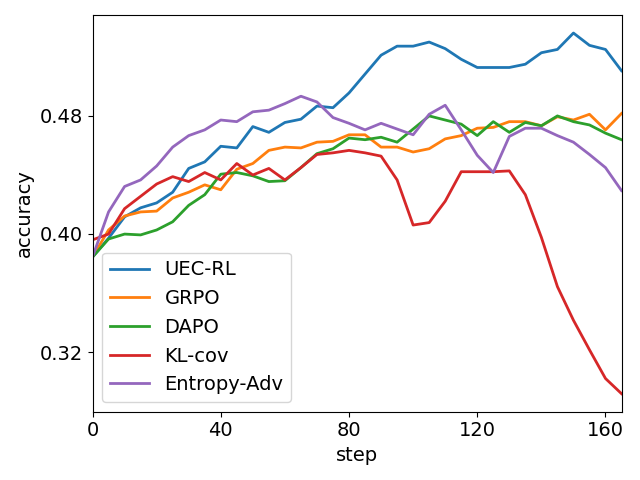}
  \includegraphics[width=0.3\linewidth]{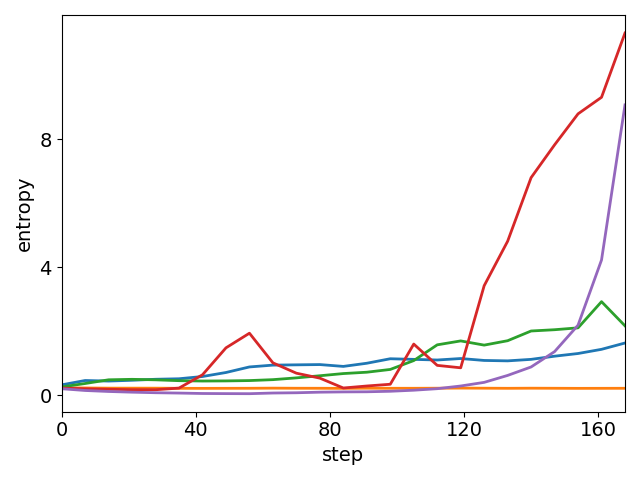}
  \includegraphics[width=0.3\linewidth]{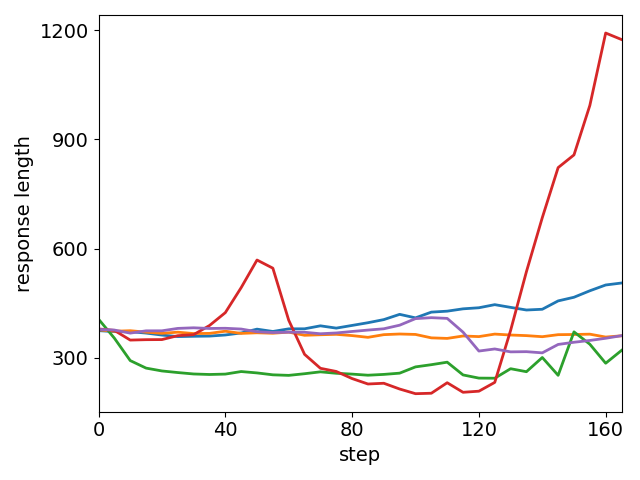}
    \caption{Results on Geometry3K. UEC-RL achieves the best accuracy, exhibits more stable entropy and response-length dynamics, and also demonstrates excellent training efficiency with lower per-step cost than GRPO.}
        
 \label{fig:GEO3K}
\end{figure*}

\subsection{Controllable Entropy Stabilizer}

Targeted exploration allows the policy to increase entropy on difficult prompts, but a complementary mechanism is required to prevent uncontrolled entropy growth and ensure convergence. UEC-RL introduces a controllable entropy stabilizer that repeatedly reinforces high-quality trajectories discovered during exploration.

Positive-advantage trajectories found under expanded sampling often have low initial probability and thus limited influence when used only once. Revisiting them strengthens their gradients and shifts probability mass toward correct reasoning patterns, producing a stabilizing effect on entropy.

\begin{theorem}[Entropy Stabilization]
\label{theo:replay_entropy}
Let $(q,o)$ be a trajectory with $A(q,o)>0$.  
If one update increases its log-likelihood,
\[
\log \pi_\theta^{k}(o\mid q) > \log \pi_\theta^{k-1}(o\mid q),
\]
then repeating this update (e.g., via replay) yields
\[
H(\pi_\theta^{k+1}) - H(\pi_\theta^{k}) < 0 .
\]
\end{theorem}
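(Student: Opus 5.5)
The plan is to derive Theorem~\ref{theo:replay_entropy} from Theorem~\ref{theo:entropy} by showing that the covariance term there is positive when the replayed update is applied. We model the replayed update as one more natural-policy-gradient step driven by the same positive-advantage trajectory $(q,o)$ with $A(q,o)>0$. For a softmax policy, the natural policy gradient step is $\log\pi^{k+1}(a|s) = \log\pi^{k}(a|s) + \eta A^{\pi^k}(s,a) - \log Z_s$. Consequently, the increment in log-likelihood between consecutive steps is an affine function of the advantage, up to the state-dependent normalizer. The hypothesis $\log\pi^{k}(o|q) > \log\pi^{k-1}(o|q)$ then tells us that, at step $k$, the trajectory $o$ already sits at elevated log-probability relative to the rest of the group, and that this elevation is aligned with its positive advantage.

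The key steps, in order, are as follows.

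\textbf{First}, restrict attention to the prompt $q$, viewed as the state, and write the step-$(k+1)$ entropy change from Theorem~\ref{theo:entropy} as $-\eta\,\mathrm{Cov}_{a\sim\pi^k(\cdot|q)}[\log\pi^k(a|q), A(q,a)]$. Here the advantage signal for the replay step is concentrated on $o$, i.e.\ $A(q,a)=A(q,o)\mathbf{1}[a=o]$ minus its mean.

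\textbf{Second}, compute this covariance explicitly for an indicator-type advantage. We obtain $\mathrm{Cov} = \pi^k(o|q)\,A(q,o)\bigl(\log\pi^k(o|q) - \mathbb{E}_{a\sim\pi^k}\log\pi^k(a|q)\bigr)$. Since $\mathbb{E}_{a\sim\pi^k}\log\pi^k(a|q) = -H(\pi^k(\cdot|q))$, the sign of the covariance is the sign of $\log\pi^k(o|q) + H(\pi^k(\cdot|q))$. In words, it is positive exactly when $o$ is more likely than a ``typical'' sample.

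\textbf{Third}, use the hypothesis to establish that sign. From the previous update $\pi^{k}(o|q)\propto \pi^{k-1}(o|q)e^{\eta A(q,o)}$, the mass of $o$ rose while all other actions were renormalized down. We then argue that the surprisal of $o$ moved below the entropy, which is the average surprisal, or at least that it moved in the direction that makes the covariance positive.

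\textbf{Finally}, conclude $H(\pi^{k+1})-H(\pi^k)\approx -\eta\,\mathrm{Cov}<0$, and aggregate over $s\sim d_\mu^k$, noting that states other than $q$ contribute nothing to first order.

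The main obstacle is the third step. A single increase in $\log\pi(o|q)$ does not by itself guarantee $\log\pi^k(o|q) > \mathbb{E}\log\pi^k$: a trajectory with very low initial probability, such as one discovered under temperature $t'$, can gain mass and still remain atypically unlikely. I would first try to handle this by reinterpreting the statement in a cumulative, ``repeated update'' sense. Each replay multiplies the odds of $o$ by $e^{\eta A(q,o)}$, so after finitely many repeats $\pi(o|q)$ exceeds $e^{-H}$, and from then on every further replay strictly lowers entropy. Alternatively, I would make explicit the implicit assumption that the first update already placed $o$ above typical log-probability, which is the regime the paper appeals to when it says that repeated reinforcement of a correct reasoning pattern concentrates mass. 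Under either reading, the rest of the argument is a first-order Taylor expansion inherited from Theorem~\ref{theo:entropy}, so the result holds in the same ``$\approx$'' sense as that theorem.
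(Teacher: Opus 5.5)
Your route is the paper's route, and the step you flag as the main obstacle is exactly the one the paper asserts without justification. Its entire proof is that, since $A(q,o)>0$, raising $\pi_\theta(o\mid q)$ ``aligns high-advantage actions with high probability,'' so the covariance in Theorem~\ref{theo:entropy} is positive.

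Your Step~2 computation, $\mathrm{Cov}=\pi^k(o\mid q)\,A(q,o)\bigl(\log\pi^k(o\mid q)+H(\pi^k(\cdot\mid q))\bigr)$, is correct, and it shows why that inference fails. The sign is decided by whether $\pi^k(o\mid q)>e^{-H(\pi^k(\cdot\mid q))}$. The hypothesis carries no information about this: in your model it holds automatically whenever $A(q,o)>0$ and $\pi^{k-1}(o\mid q)<1$.

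In fact the theorem is false as stated, so no argument can close Step~3. Take one state with two actions, $\pi^{k-1}(o\mid q)=0.01$ and $\eta A(q,o)=\log 2$. The update and its replay move $\pi(o\mid q)$ to about $0.02$ and then about $0.04$, so the hypothesis holds. But binary entropy is increasing on $(0,\tfrac{1}{2})$, so $H(\pi^{k+1})>H(\pi^k)$, exactly and not just to first order. This is precisely the regime the method targets. The paper itself says exploratory trajectories ``often have low initial probability,'' and that low-probability high-advantage actions make the covariance negative. For the same reason, delete the sentence in your first paragraph that reads the hypothesis as saying $o$ ``already sits at elevated log-probability relative to the rest of the group''; that is the paper's non sequitur.

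What you can prove is a corrected statement, and you should say explicitly that this is what you are doing. With the added hypothesis $\log\pi^k(o\mid q)>-H(\pi^k(\cdot\mid q))$, your Steps~1, 2 and 4 form a complete first-order argument. This holds in the $\approx$ sense of Theorem~\ref{theo:entropy}, and under your modelling of replay as the logit increment $\eta A(q,o)\mathbf{1}[a=o]$. That is already more than the paper gives.

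For the cumulative reading, the claim ``from then on every further replay strictly lowers entropy'' needs one line of justification. With only the logit of $o$ moving, the other actions keep fixed relative proportions $r$. So $H=h(p)+(1-p)H(r)$, with $p=\pi(o\mid q)$ and $h$ the binary entropy. This is concave in $p$ with unique maximizer $p^\ast=1/(1+e^{H(r)})$, at which $\log p^\ast=-H$. Hence once $p$ passes the threshold it stays past it, and every further replay decreases $H$.
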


\begin{proof}
Because $A(q,o)>0$, raising $\pi_\theta(o\mid q)$ aligns high-advantage actions with high probability, producing a positive covariance term in Theorem~\ref{theo:entropy}. A positive covariance leads to decreasing entropy.
\end{proof}

Thus, exploration enlarges entropy only when needed, and the stabilizer gradually decreases entropy by consolidating informative trajectories. This interplay transitions training from exploration to stable convergence, avoiding both entropy collapse and divergence. Concretely, we form a candidate set from both regular and exploratory samples ($O_R \cup O_H$), filter trajectories whose advantages exceed a threshold $A_0$, and keep only the most recent $s'$ trajectories to prioritize up-to-date behaviors:
\begin{equation*}
\mathcal{O}_{S}
=\operatorname{Recent}_{s'}\!\left(\{\,o_i \in O_R \cup O_H : \hat{A}_{i,t}>A_0\,\}\right),
\end{equation*}
where $\operatorname{Recent}_{s'}(\cdot)$ returns the $s'$ most recent trajectories in generation order.

\section{Experiments}
We evaluate UEC-RL on both text-based and multimodal mathematical reasoning tasks. Our implementation is built upon EasyR1 and VeRL ~\cite{zheng2025easyr1,sheng2025hybridflow}. 
\paragraph{Datasets and benchmarks.}
We train UEC-RL on three datasets spanning both text-only and multimodal reasoning, and evaluate it on a comprehensive suite of text and multimodal benchmarks, with Geometry3K additionally used for in-domain analysis of training dynamics and ablations. Full details of the training datasets and evaluation benchmarks are provided in Appendix~\ref{app:datasets_benchmarks}.

\paragraph{Baseline.} For comparison, we include four representative RL baselines. Additional implementation details are provided in Appendix~\ref{app:impl}.
\begin{itemize}[leftmargin=1em]
    \item GRPO~\cite{shao2024deepseekmath}, the most widely adopted RL baseline;
    \item DAPO~\cite{yu2025dapo}, which enhances exploration through the clip-higher mechanism;
    \item KL-cov~\cite{cui2025entropy}, a covariance-aware KL regularization for entropy baseline;
    \item Entropy-Adv~\cite{cheng2025reasoning}, which encourages exploration by augmenting the advantage with an entropy term.
\end{itemize}

\begin{figure*}[t]
 \centering

  \includegraphics[width=0.3\linewidth]{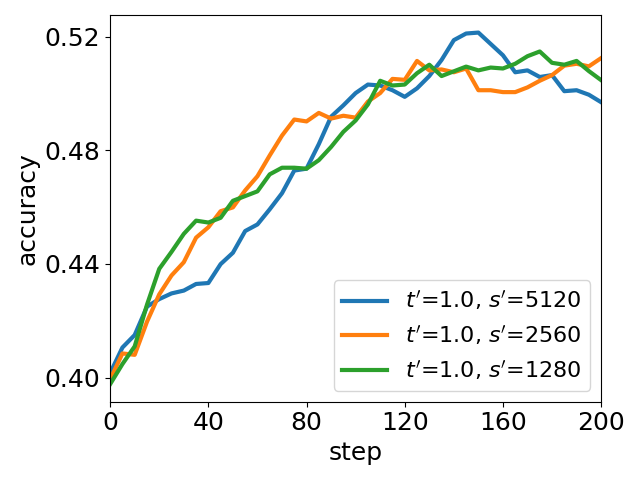}
  \includegraphics[width=0.3\linewidth]{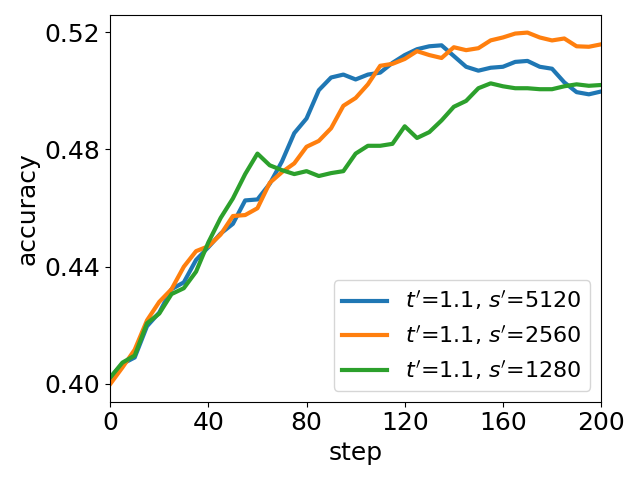}
  \includegraphics[width=0.3\linewidth]{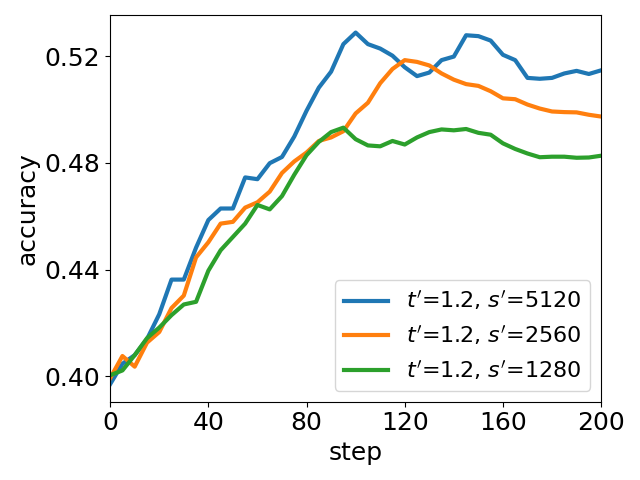}
  \\
  \includegraphics[width=0.3\linewidth]{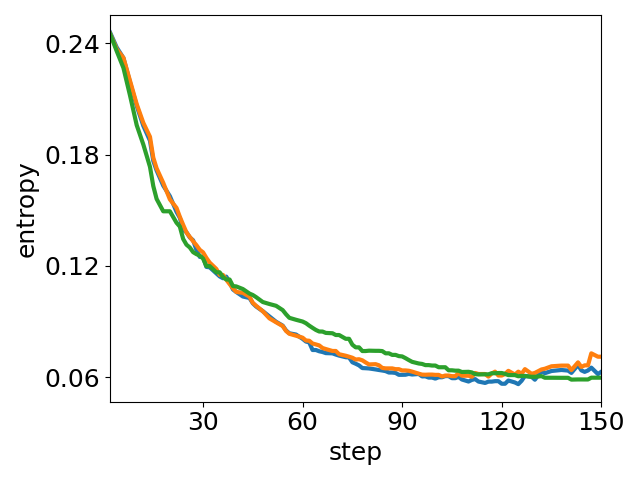}
  \includegraphics[width=0.3\linewidth]{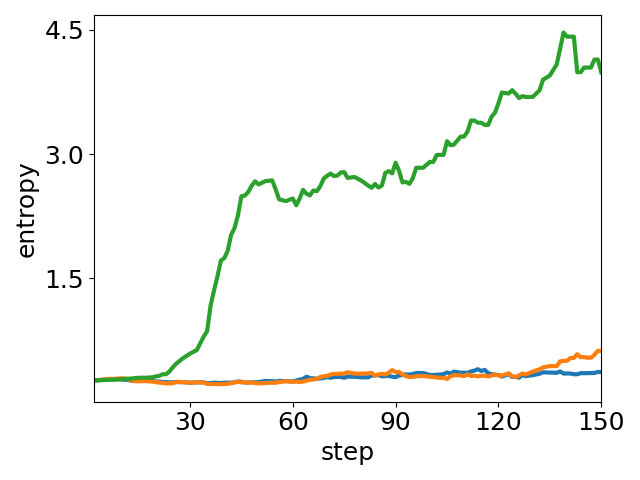}
  \includegraphics[width=0.3\linewidth]{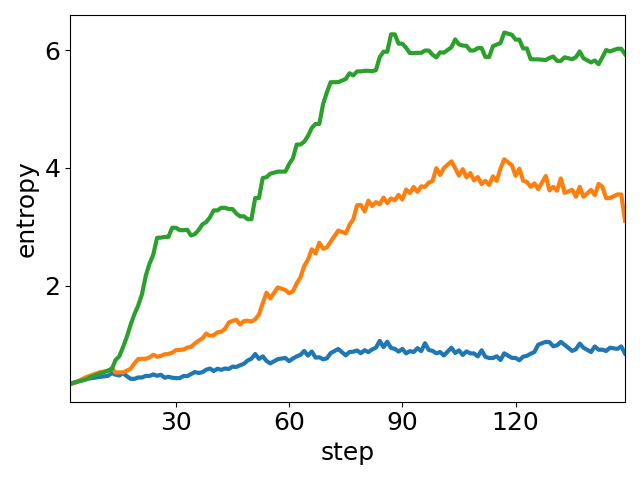}
  \\
    \vspace{0.5em}
    \begin{minipage}{0.98\textwidth}
        \centering
        \begin{tabular}{lccc}
        \toprule
        Accuracy (Entropy) & $t'=1.0$ & $t'=1.1$ & $t'=1.2$ \\
        \midrule
        $s'=5120$ & 53.74 (0.09) & 52.41 (0.31) & \underline{\textbf{55.41}} (0.73) \\
        $s'=2560$ & 52.75 (0.09) & \underline{54.41} (0.31) & 52.75 (2.37) \\
        $s'=1280$ & 52.41 (0.09) & 51.08 (2.44) & 50.25 (4.29) \\
        \bottomrule
        \end{tabular}
    \end{minipage}
 \caption{Ablation of parameter tuning: peak accuracy and average entropy under varying $t'$ and $s'$. Higher $t'$ boosts exploration (entropy↑), larger $s'$ aids stabilization (entropy↓), and best performance arises from the balance state of entropy.}
 \label{fig:paratuning}
\end{figure*}

\begin{figure*}[t]
 \centering
\begin{tabular}{ccccc}
  \toprule
  \quad Geometry3K  & UEC-RL  & w/o exploration  & w/o stabilizer \\
  \midrule
  Accuracy  & \textbf{55.41} & 50.41 \textcolor{red!70!black}{(-5.00)}  & 49.58 \textcolor{red!70!black}{(-5.93)}\\
  \bottomrule
 \end{tabular}\\
 \vspace{2mm}
  \includegraphics[width=0.3\linewidth]{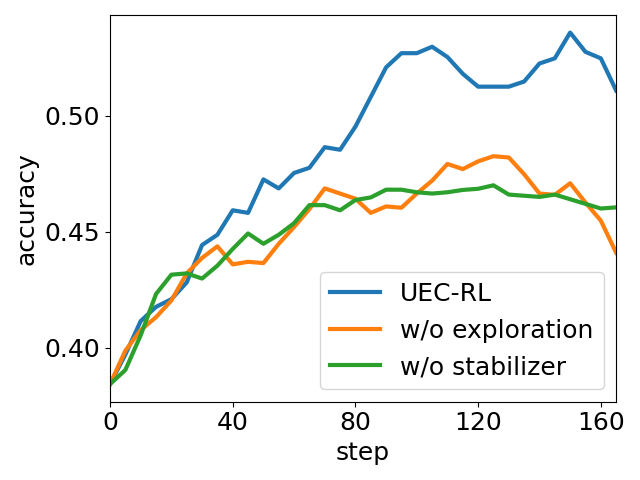}
  \includegraphics[width=0.3\linewidth]{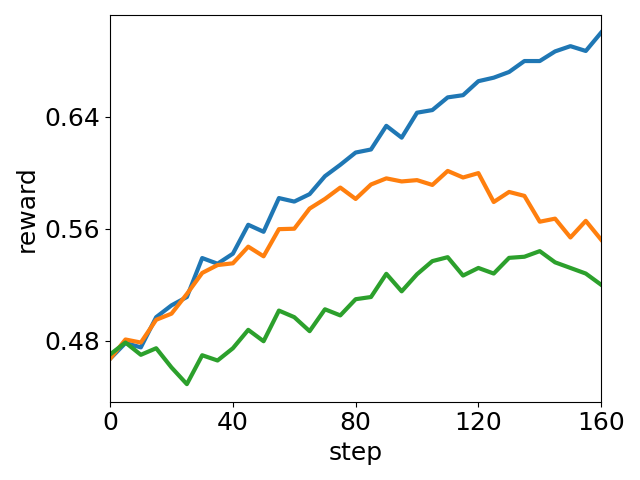}
  \includegraphics[width=0.3\linewidth]{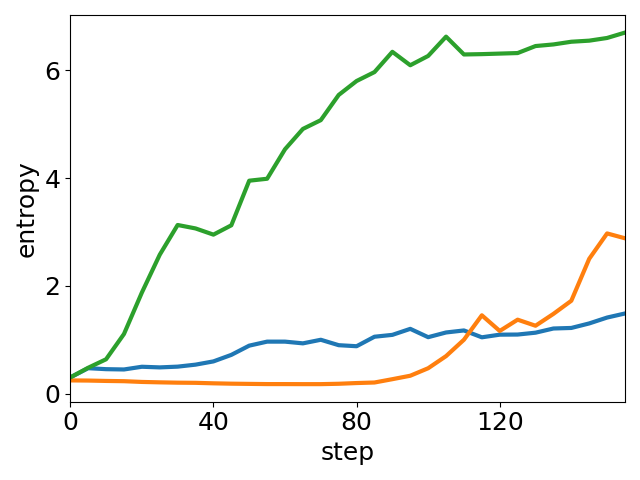}
 \caption{Module ablation of UEC-RL. Removing either exploration or stabilizer consistently degrades performance, highlighting their complementary roles in improving accuracy.}
 \label{ab}
\end{figure*}

\section{Main Results}

Table~\ref{tab:main_results_all} summarizes Pass@1 performance on text and multimodal reasoning benchmarks. Overall, UEC-RL consistently achieves the strongest or near-strongest results across different model families and task settings, showing that its bidirectional entropy-control mechanism generalizes beyond a specific backbone or modality.

\paragraph{Text reasoning.}
On Qwen2.5-Math-7B, UEC-RL achieves the best overall average among all RL baselines, outperforming GRPO, DAPO, KL-cov, and Entropy-Adv. The gains are particularly clear on AIME24, AIME25, MATH, and Minerva, showing that targeted exploration is especially beneficial for challenging mathematical reasoning tasks. These improvements further extend to Pass@$k$ evaluation (Figure~\ref{fig:pass@k}), where UEC-RL yields consistently stronger curves, indicating that the learned policy produces not only higher-quality but also more diverse reasoning trajectories. The advantage of UEC-RL also transfers to Llama3.1-8B-Instruct. UEC-RL achieves the best average score among all compared RL methods and improves over GRPO on most benchmarks. UEC-RL provides a robust optimization advantage across model families, suggesting that unified entropy control captures a more general property of RL-based reasoning training. Moreover, the gains extend beyond mathematical reasoning to the commonsense question answering benchmarks (ARC$_{\text{challenge}}$ and MMLU$_{\text{pro}}$). This cross-domain improvement suggests that the benefit of unified entropy control is not limited to math-specific reasoning, but generalizes to broader knowledge-intensive and commonsense reasoning tasks.

\paragraph{Multimodal reasoning.}
On multimodal benchmarks, UEC-RL again achieves the best overall average, outperforming the RL baselines across visually grounded reasoning tasks. These results show that UEC-RL remains effective even in multimodal settings, where training is typically less stable and gradient variance is larger. The consistent gains confirm that UEC-RL generalizes reliably from text-only reasoning to vision-language reasoning.

\paragraph{In-domain analysis on Geometry3K.}
We additionally conduct an in-domain analysis on Geometry3K, a challenging visually grounded mathematical benchmark. As shown in Figure~\ref{fig:GEO3K}, UEC-RL achieves a substantial accuracy gain over GRPO (\textbf{55.41} vs. 50.75), while also exhibiting more stable entropy dynamics and a smoother training trajectory. Existing exploration strategies designed for LLMs often fail to transfer to VLMs because visual reasoning amplifies gradient variance and makes entropy harder to control. In contrast, UEC-RL maintains effective exploration without destabilizing training, as reflected in both the entropy and response-length curves.
﻿
\paragraph{Efficiency.}
Figure~\ref{fig:GEO3K} also reports per-step training efficiency. UEC-RL runs at 0.79$\times$ the step time of GRPO, making it noticeably more efficient than GRPO and Entropy-Adv, while remaining slightly slower than DAPO. Importantly, this moderate computational cost delivers the strongest accuracy among all compared methods. UEC-RL improves reasoning performance without incurring the full cost typically associated with heavier exploration strategies.

\section{Ablation Study}

We conduct ablation experiments to study how the two key components of UEC-RL, namely the targeted exploration mechanism and the entropy-reducing stabilizer, contribute to training effectiveness. Our analysis consists of two parts: parameter tuning, which examines how exploration strength and stabilization capacity influence entropy dynamics, and module-level ablations, which isolate the effect of each component.

\subsection{Parameter tuning.}
To characterize how exploration and stabilization jointly determine model behavior, we vary the exploration temperature $t'$ and the stabilizer budget $s'$. As shown in Figure~\ref{fig:paratuning}, increasing $t'$ enlarges the exploration space and raises policy entropy, helping the model discover deeper reasoning chains on difficult prompts. This trend is consistent with Theorem~\ref{theo:entropy}, which predicts entropy increase when exploration encourages low-probability actions.

When $t'$ becomes large and $s'$ is insufficient, entropy grows rapidly and accuracy degrades. In contrast, increasing $s'$ strengthens the influence of high-quality trajectories and gradually stabilizes entropy, matching the entropy reduction effect described in Theorem~\ref{theo:replay_entropy}. The best performance appears when exploration and stabilization operate in balance. These results confirm that exploration enables the model to escape shallow local optima, while stabilization consolidates correct behaviors and maintains entropy within a desirable range.

\subsection{Module ablations.}
We further disable each component to isolate its contribution. Removing the exploration module suppresses entropy growth on difficult prompts and restores entropy collapse, leading to a reduction of $5.00$ accuracy points. Removing the stabilizer allows entropy to grow excessively and causes unstable reward dynamics, resulting in a reduction of $5.93$ accuracy points. Figure~\ref{ab} illustrates that exploration provides the necessary entropy increase to activate deeper reasoning, whereas the stabilizer prevents uncontrolled entropy drift and ensures convergence. Only the full UEC-RL framework exhibits both steady entropy regulation and consistent performance gains.

\section{Conclusion}
In this work, we presented UEC-RL, which addresses entropy collapse and provides a novel mechanism for bidirectional entropy regulation in RL for large language models. Empirically, UEC-RL delivers consistent improvements over strong RL baselines across both text-only and multimodal reasoning benchmarks. It improves Pass@1 performance while also strengthening Pass@$k$ under sampling, indicating not only higher single-trajectory accuracy but also a more diverse and reliable policy distribution. These results support the central claim that effective reasoning improvements require jointly promoting exploration and stabilizing optimization, rather than relying on one-sided clip-higher or entropy bonus.

Looking forward, UEC-RL can be extended in several directions. More accurate difficulty estimation could better identify when entropy should be increased, improving exploration efficiency. Adaptive scheduling of the exploration temperature and stabilizer budget could reduce task-specific tuning while maintaining the desired entropy regime. Finally, integrating UEC-RL with multi-step verification or agent-based reasoning systems may allow entropy control at finer granularity, guiding step-level branching and consolidation to further improve reliability and scalability in complex reasoning tasks.

\section*{Ethical considerations}
AI is only used for translation and language polishing in this paper.

\section*{Limitations}

While UEC-RL enables stable and controllable exploration, its effectiveness depends on selecting appropriate values for the exploration temperature $t'$ and stabilizer budget $s'$. Together, these hyperparameters determine the entropy range maintained during training. However, the entropy level that yields optimal performance is typically moderate, for example, around 0.5, and achieving it often requires task-specific hyperparameter configurations.

This variability arises because tasks differ substantially in difficulty: harder datasets require stronger exploration to escape local optima, whereas easier or lower-variance tasks benefit from tighter stabilization. Consequently, the $(t',s')$ combination that preserves a desirable entropy regime is not universal but instead depends on the intrinsic difficulty and variance structure of the training set. This makes UEC-RL relatively sensitive to hyperparameter choices, and achieving consistent performance across domains may require task-specific tuning.

Developing adaptive or self-regulating strategies that automatically calibrate $(t',s')$ based on task difficulty remains an important direction for future research.

\section*{Acknowledgments}
This work is supported by the Zhongguancun Academy (Grant No.s C20250203) and Natural Science Foundation of Tianjin(No.24JCQNJC02170). Weiran Huang is supported by National Natural Science Foundation of China (No. 62406192), Shanghai Municipal Special Program for Basic Research on General AI Foundation Models (Grant No. 2025SHZDZX025G03).

\bibliography{reference}
\appendix

\section{Datasets and benchmarks}
\label{app:datasets_benchmarks}
\paragraph{Training datasets.}
We train UEC-RL on three datasets that cover different modalities and difficulty levels:
\begin{itemize}[leftmargin=1em]
    \item \textbf{DAPO-17K}: a large-scale out-of-domain mathematical reasoning dataset designed to evaluate RL-based alignment algorithms for LLMs \cite{yu2025dapo}.
    \item \textbf{Multimodal dataset (6k)}: sampled from the multimodal corpora introduced in~\cite{wei2025advancing,wei2025unsupervised}, spanning a wide range of diagram, geometry, chart, and table problems. The dataset aggregates established resources including Geometry3K~\cite{lu2021inter}, GeoQA~\cite{chen2021geoqa}, GeoQA-Plus~\cite{cao2022augmented}, Geos~\cite{seo2015solving}, AI2D~\cite{kembhavi2016diagram}, TQA~\cite{kim2018textbook}, FigureQA~\cite{kahou2017figureqa}, TabMWP~\cite{lu2022dynamic}, ChartQA~\cite{masry2022chartqa}, IconQA~\cite{lu2021iconqa}, Clevr-Math~\cite{lindstrom2022clevr}, M3CoT~\cite{chen2024m}, and ScienceQA~\cite{lu2022learn}.
    \item \textbf{Geometry3K}: an in-domain geometric reasoning dataset used for detailed evaluation \cite{lu2021inter}.
\end{itemize}

\paragraph{Evaluation benchmarks.}
We assess UEC-RL across three categories of benchmarks:
\begin{itemize}[leftmargin=1em]
    \item \textbf{Text reasoning benchmarks.} We evaluate Pass@1 on seven widely used reasoning benchmarks: AIME24~\cite{hf_aime2024}, AIME25~\cite{hf_aime2024}, MATH~\cite{lightman2023lets}, GSM8K~\cite{cobbe2021gsm8k}, Minerva~\cite{lewkowycz2022solving}, ARC$_{\text{challenge}}$~\cite{clark2018think}, and MMLU$_{\text{pro}}$~\cite{wang2024mmlu}. These benchmarks span competition-level problems (AIME), formal mathematics (MATH), school-level word problems (GSM8K), scientific reasoning (Minerva), commonsense and science question answering (ARC), and broad knowledge-intensive multiple-choice reasoning across diverse subjects (MMLU), providing a comprehensive assessment of textual reasoning ability and generalization.
    \item \textbf{Multimodal reasoning benchmarks.}  We further evaluate on four challenging multimodal benchmarks:  MathVision~\cite{wang2024measuring}, MathVerse~\cite{zhang2024mathverse}, MathVista~\cite{lu2023mathvista}, and We-Math~\cite{qiao2024we}.  These benchmarks cover diverse visual formats—including diagrams, charts, tables, and multi-image compositions—and require integrating visual and symbolic reasoning.
    \item \textbf{Geometry3K in-domain dataset.}  To better understand the behavior of entropy-controlled RL, we conduct an in-depth analysis on Geometry3K \cite{lu2021inter}, including accuracy curves, entropy dynamics, response length behavior, and ablation studies.
\end{itemize}

\section{Implementation details \label{app:impl}}  
We follow the default EasyR1 configuration unless otherwise noted. Table~\ref{tab:impl_details} summarizes the hyperparameters for GRPO, DAPO, Entropy-Adv, and UEC-RL. For UEC-RL, difficult prompts trigger expanded exploration with $G'=20$ and temperature $t'=1.2$. Trajectories with advantages greater than 1 are stored in a replay buffer of size 5120, and replay is performed every 5 optimization steps. For each experiment setting, we run a single training run, save checkpoints every 10 optimization steps, and report the maximum performance achieved on each benchmark over all saved checkpoints.

\begin{table*}
\centering
\caption{Summary of implementation and evaluation details for all compared methods.}
\label{tab:impl_details}
\renewcommand{\arraystretch}{1.25}
\begin{tabular}{lccccc}
\toprule
\textbf{Settings of Training} & \textbf{GRPO} & \textbf{DAPO} & \textbf{Entropy-Adv} & \textbf{KL-cov} & \textbf{UEC-RL} \\
\midrule
\multicolumn{6}{l}{\textbf{Training settings}} \\
\midrule
Hardware & \multicolumn{5}{c}{8×A800 GPUs (40GB)} \\
Policy model init & \multicolumn{5}{c}{Qwen2.5-VL-7B-Instruct and Qwen2.5-Math-7B} \\
Max response length & \multicolumn{5}{c}{8192} \\
Batch size & \multicolumn{5}{c}{512} \\
Primary rollout $G$ & \multicolumn{5}{c}{5} \\
Learning rate & \multicolumn{5}{c}{$1\times10^{-6}$} \\
Temperature (training) & \multicolumn{5}{c}{1.0} \\
\midrule
$\epsilon_{\text{low}}$ & 0.2 & 0.2 & 0.2 & 0.2 & 0.2 \\
$\epsilon_{\text{high}}$ & 0.2 & 0.3 & 0.2 & 0.2 & 0.2 \\
Entropy bonus & -- & -- & $\beta,\kappa=0.4,2$ & see \citet{cui2025entropy} & -- \\
\midrule
Additional rollout $G'$ & -- & -- & -- & -- & 20 \\
Exploration temperature $t'$ & -- & -- & -- & -- & 1.2 \\
Replay buffer size $s'$ & -- & -- & -- & -- & 5120 \\
Replay frequency & -- & -- & -- & -- & 5 steps \\
Replay criterion & -- & -- & -- & -- & $\hat{A} > 1$ \\
\midrule
\multicolumn{6}{l}{\textbf{Settings of evaluation }} \\
\midrule
Max response length (eval) & \multicolumn{5}{c}{8192} \\
Temperature (eval) & \multicolumn{5}{c}{0.2} \\
Top-$p$ (eval) & \multicolumn{5}{c}{0.95} \\
\bottomrule
\end{tabular}
\end{table*}

\end{document}